\def\eqref#1{equation~\ref{#1}}
\def\1{\bm{1}}
\DeclareMathAlphabet{\mathsfit}{\encodingdefault}{\sfdefault}{m}{sl}
\SetMathAlphabet{\mathsfit}{bold}{\encodingdefault}{\sfdefault}{bx}{n}
\DeclareMathOperator*{\argmax}{arg\,max}
\newcommand{\cmark}{\ding{51}}%
\newcommand{\xmark}{\ding{55}}%
\newtheorem{lemma}{Lemma}
\newtheorem{proposition}{Proposition}
\theoremstyle{definition}
\newtheorem{intuition}{Intuition}
\def\th@intuition{
  \th@remark
  \thm@headfont{\itshape}
}
\newcites{apndx}{References in Appendix}
\title{SF$($DA$)^{2}$: Source-free Domain Adaptation \\ Through the Lens of Data Augmentation}
\author{Uiwon Hwang$^{1}$\qquad Jonghyun Lee$^{2}$\qquad Juhyeon Shin$^{3}$\qquad Sungroh Yoon$^{2,3,}$\thanks{Corresponding author} \\
  $^{1}$ Division of Digital Healthcare, Yonsei University \\
  $^{2}$ Department of Electrical and Computer Engineering, Seoul National University \\
  $^{3}$ Interdisciplinary Program in Artificial Intelligence, Seoul National University \\
  \texttt{uiwon.hwang@yonsei.ac.kr,   \{leejh9611,\:\:newjh12,\:\:sryoon\}@snu.ac.kr}}
\begin{document}

\maketitle

\begin{abstract}
In the face of the deep learning model's vulnerability to domain shift, source-free domain adaptation (SFDA) methods have been proposed to adapt models to new, unseen target domains without requiring access to source domain data. Although the potential benefits of applying data augmentation to SFDA are attractive, several challenges arise such as the dependence on prior knowledge of class-preserving transformations and the increase in memory and computational requirements. In this paper, we propose Source-free Domain Adaptation Through the Lens of Data Augmentation (SF(DA)$^2$), a novel approach that leverages the benefits of data augmentation without suffering from these challenges. We construct an augmentation graph in the feature space of the pretrained model using the neighbor relationships between target features and propose spectral neighborhood clustering to identify partitions in the prediction space. Furthermore, we propose implicit feature augmentation and feature disentanglement as regularization loss functions that effectively utilize class semantic information within the feature space. These regularizers simulate the inclusion of an unlimited number of augmented target features into the augmentation graph while minimizing computational and memory demands. Our method shows superior adaptation performance in SFDA scenarios, including 2D image and 3D point cloud datasets and a highly imbalanced dataset.
\end{abstract}

\section{Introduction}
In recent years, deep learning has achieved significant advancements and is widely explored for real-world applications. However, the performance of deep learning models can significantly deteriorate when deployed on unlabeled target domains, which differ from the source domain where the training data was collected. This \textit{domain shift} poses a challenge for applying deep learning models in practical scenarios. To address this, various domain adaptation (DA) methods have been proposed to adapt the model to new, unseen target domains \citep{bsp, safn, mcc, fixbi, mmd, dann, adda, pointdan}. Traditional domain adaptation techniques require both source domain and target domain data. However, practical limitations arise when source domain data is inaccessible or difficult to obtain due to cost or privacy concerns. Source-free domain adaptation (SFDA) overcomes the need for direct access to the source domain data by using only a model pretrained on the source domain data. SFDA methods focus on adapting the model to the target domain via unsupervised or self-supervised learning, which leverages the class semantic information learned from the source domain data \citep{shot, nrc, cowa, aad, dac}.


Recent studies \citep{adacontrast, dac} have explored using data augmentation to enhance adaptation performance by enriching target domain information with transformations such as flipping or rotating images. These methods, however, require prior knowledge to ensure the preservation of class semantic information within augmented data. For instance, applying a 180-degree rotation to an image of the digit 6 would result in the digit 9, which compromises the class semantic information. Moreover, increasing the number of augmented samples can lead to higher memory usage and computational load. In response to these challenges, we aim to devise an SFDA method that effectively harnesses the advantages of data augmentation while mitigating its drawbacks.

\begin{figure*}[t] \label{fig:overview}
\centering
\includegraphics[width=1\textwidth]{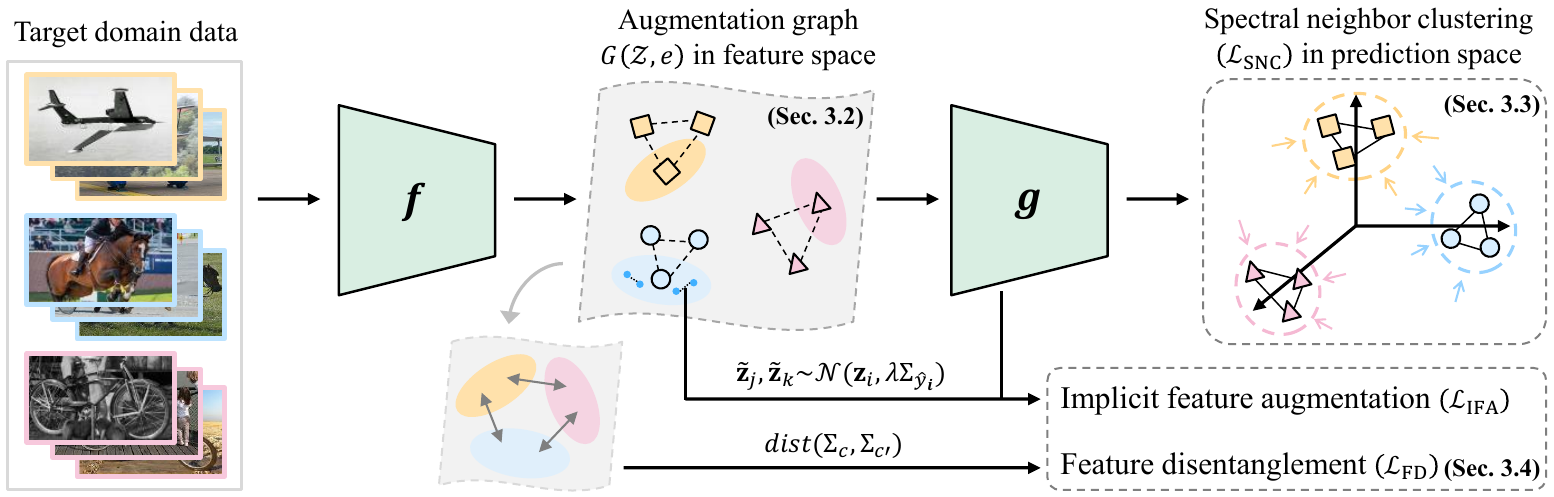} 
\caption{Overview of SF(DA)$^2$. Here, $f$ and $g$ indicate the feature extractor and the classifier, $\mathbf{z}$ denotes a target feature, $\Sigma$ is a covariance matrix for a class, and $dist(\cdot,\cdot)$ denotes a distance measure.} 
\label{fig:overview}
\vspace{-0.2cm}
\end{figure*}

In this paper, we propose a novel approach named Source-free Domain Adaptation Through the Lens of Data Augmentation (SF(DA)$^2$). We present a unique perspective of SFDA inspired by data augmentation, which leads us to the introduction of an augmentation graph in the feature space of the pretrained model. The augmentation graph depicts relationships among target features using class semantic information learned by the pretrained model. Building upon the augmentation graph, the proposed method consists of spectral neighborhood clustering (SNC), designed for maximal utilization of the information from target domain data samples, and implicit feature augmentation (IFA) along with feature disentanglement (FD), designed to leverage additional information from the estimated distribution of the target domain data. 

More specifically, we propose SNC within the prediction space to partition the augmentation graph via spectral clustering. The SNC loss promotes clustering in the prediction space and guides the classifier in assigning clusters, which facilitates effective adaptation. Furthermore, we explore the inclusion of augmented target features (vertices) into the augmentation graph. This allows us to identify better clusters, which might be challenging to discover using only the given target domain data samples. To this end, we propose two regularization strategies: IFA and FD. We derive the IFA loss as a closed-form upper bound for the expected loss over an infinite number of augmented features. This formulation of IFA minimizes computational and memory overhead. The FD loss serves as a crucial component in preserving class semantic information within the augmented features by disentangling the feature space.

We performed experiments on challenging benchmark datasets, including VisDA \citep{visda}, DomainNet \citep{domainnet}, PointDA \citep{pointdan}, and VisDA-RSUT \citep{isfda}. We verified that our method outperforms existing state-of-the-art methods on 2D image, 3D point cloud, and highly imbalanced datasets. Furthermore, we observed that IFA and FD effectively boost the performances of existing methods. The contributions of this work can be summarized as follows:

\begin{itemize}

\item We provide a fresh perspective on SFDA by interpreting it through the lens of data augmentation. Then, we propose SF(DA)$^2$ that thoroughly harnesses intuitions derived from data augmentation without explicit augmentation of target domain data.

\item We propose the spectral neighborhood clustering (SNC) loss for identifying partitions in the augmentation graph via spectral clustering. This approach effectively clusters the target domain data in the prediction space.

\item We derive the implicit feature augmentation (IFA) loss, which enables us to simulate the effects of an unlimited number of augmented features while maintaining minimal computational and memory overhead. To supplement the effectiveness of IFA, we propose the feature disentanglement (FD) loss. This further regularizes the feature space to learn distinct class semantic information along different directions.

\item We validate the effectiveness of the proposed method through experiments performed on challenging 2D image and 3D point cloud datasets. Our method significantly outperforms existing methods in the SFDA settings.

\end{itemize}

\section{Related Work} \label{sec:2}

\paragraph{Source-free Domain Adaptation}
SHOT \citep{shot} utilizes information maximization and pseudo-labeling with frozen classifier weights. CoWA-JMDS \citep{cowa} estimates a Gaussian mixture in the feature space to obtain target domain information. NRC \citep{nrc} and AaD \citep{aad} are rooted in local consistency between neighbors in the feature space. Contrastive learning-based methods such as DaC \citep{dac} and AdaContrast \citep{adacontrast} involve data augmentation, which requires not only prior knowledge to preserve class semantic information within augmented data but also computational and memory overhead. Our method circumvents these issues, providing efficient adaptation without explicit data augmentation.

\paragraph{Data Augmentation}
Data augmentation improves the generalization performance of the model by applying class-preserving transformations to training data and incorporating the augmented data in model training. To minimize the reliance on prior knowledge, several studies propose optimization techniques to find combinations of transformations within constrained search spaces \citep{aa,if,deepaa}. ISDA \citep{isda} implicitly augments data by optimizing an upper bound of the expected cross-entropy loss of augmented features. We propose implicit feature augmentation tailored for SFDA settings. 

\paragraph{Spectral Contrastive Learning}
\citet{spectralcl} propose contrastive loss with the augmentation graph, connecting augmentations of the same data sample and performing spectral decomposition. They prove that minimizing this loss achieves linearly separable features. Our work is motivated by spectral contrastive learning but with two key differences. First, we define the augmentation graph in the feature space, leveraging class semantic information learned by the pretrained model to directly enhance predictive performance on target domain data. Second, our method does not rely on explicit data augmentation for positive pairs; instead, we utilize neighbors in the feature space as positive pairs, considering them as highly nonlinear transformation relationships. We further discuss differences between existing work and the proposed method, as well as additional related work in Appendix \ref{app:relatedwork}.

\section{Proposed Method}
In the following sections, we propose Source-free Domain Adaptation Through the Lens of Data Augmentation (SF(DA)$^2$). We first formulate the scenario of SFDA (Section \ref{sec:3.1}). From the intuitions of data augmentation, we define the augmentation graph in the feature space of the pretrained model (Section \ref{sec:3.2}), find partitions of the augmentation graph (Section \ref{sec:3.3}), and implicitly augment features (Section \ref{sec:3.4}). An overview of SF(DA)$^2$ is illustrated in Figure \ref{fig:overview}.

\subsection{Source-free Domain Adaptation Scenario} \label{sec:3.1}
We consider source domain data $\mathcal{D}_s = \{(\mathbf{x}_i^s, y_i^s)\}_{i=1}^{M_s}$, where the class labels $y_i^s$ belong to $C$ classes. We also consider unlabeled target domain data $\mathcal{D}_t = \{\mathbf{x}_i^t\}_{i=1}^{M_t}$ sampled from target domain data distribution $P(\mathcal{X}^t)$. Target domain data have the same $C$ classes as $\mathcal{D}_s$ in this paper (closed-set setting). In SFDA scenarios, we have access to a source pretrained model consisting of a feature extractor $f$ and a classifier $g$. The feature extractor $f$ takes a target domain data sample as input and generates target features $\mathbf{z}_i=f(\mathbf{x}_i^t)$. The classifier $g$ consists of a fully connected layer and predicts classes from the target features $p_i=\delta(g(\mathbf{z}_i))$ where $\delta$ denotes the softmax function. 

\subsection{Augmentation Graph on Feature Space} \label{sec:3.2}
In the proposed method, we focus on the relationships between target domain data samples within the feature space of the pretrained model. We then build an augmentation graph grounded in our intuitions on the SFDA scenario.

Deep neural networks trained on source domain data encode class semantic information into distinct directions in their feature space \citep{isda, linear}. For example, different hairstyles for the 'person' class or wing shapes for the 'airplane' class can be represented along specific directions. This property enables the pretrained model to map target domain data samples with similar class semantic information close to one another. From this motivation, we can make the following assumption:

\begin{intuition}[Clustering assumption of source model] \label{int:1}
\textit{Target domain data that share the same semantic information are mapped to their \textbf{neighbors} in the feature space of the pretrained model. In this context, neighbors refer to target features with small cosine distances to a given target feature.}
\end{intuition}

We also hypothesize that target domain data samples with shared class semantic information can be connected through highly nonlinear functions, representing potential augmentation relationships.

\begin{intuition}[Augmentation assumption of target domain data]  \label{int:2}
\textit{Target domain data sharing class semantic information may have highly nonlinear functions to \textbf{transform each other}.}
\end{intuition}

To formalize our intuitions of the relationships between target domain data, we introduce the concept of an augmentation graph \citep{spectralcl}. The augmentation graph consists of vertices representing target domain data and edge weights representing the probability of augmentation relationships between the target domain data. From our intuitions, samples with shared semantic information are neighbors in the feature space. These samples have also augmentation relationships, enabling their connection in the augmentation graph. Consequently, we can construct the augmentation graph on the feature space, which provides a structured representation of target domain data.

Given a set of all target features in the \textit{population} distribution of the target domain $\mathcal{Z}=\{\mathbf{z}=f(\mathbf{x}^t)|\mathbf{x}^t \sim P(\mathcal{X}^t)\}$, we define the population augmentation graph $G(\mathcal{Z}, e)$ with edges $e_{ij}$ between two target features $\mathbf{z}_i$ and $\mathbf{z}_j$ as follows:
\begin{equation} \label{eq:edge}
    e_{ij} = e\left(\mathbf{z}_i, \mathbf{z}_j\right) = Pr(\mathbf{z}_j \in N_i)
\end{equation}
where $N_i$ is the set of neighbors of $\mathbf{z}_i$.

\subsection{Finding Partition on Prediction Space} \label{sec:3.3}
Using the augmentation graph, our next objective is to find meaningful partitions or clusters of the graph that represent different classes. To achieve this, we employ spectral clustering on the graph \citep{spectral1, spectral2}, which reveals the underlying structure of the data. In particular, \citet{spectralcl} proposed a loss function that performs spectral clustering on the augmentation graph:
\begin{lemma}[\citet{spectralcl}] \label{le:1}
Let $L$ be the normalized Laplacian matrix of the augmentation graph $G(\mathcal{X},l)$. The matrix $H$, which has the eigenvectors corresponding to the $k$ largest eigenvalues of $L$ as its columns, can be learned as a function $h$ by minimizing the following matrix factorization loss:
\begin{align}
\min_{h} \mathcal{L}(h) &= \|(I-L) - HH^T\|^2_F \\
&= \mathrm{const} - 2 \sum_{i,j} \frac{l_{x_ix_j}}{\sqrt{l_{x_i}}\sqrt{l_{x_j}}} h(x_i)^T h(x_j) + \sum_{i,j} \left(h(x_i)^T h(x_j) \right)^2
\end{align}
where $x_i$ and $x_j$ are vertices of the augmentation graph, $l_{x_ix_j}$ is  an edge between $x_i$ and $x_j$, $l_{x_i}=\sum_j l_{x_ix_j}$ denotes the degree of a vertex $x_i$, and $\mathrm{const}$ is a constant term.
\end{lemma}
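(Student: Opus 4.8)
The plan is to prove the lemma in two moves: an exact algebraic rewriting of the Frobenius objective into the claimed three-term form, and then an appeal to the Eckart--Young--Mirsky low-rank approximation theorem to identify the minimizer.

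First I would expand the squared Frobenius norm. Writing $\langle A, B\rangle = \Tr(A^T B)$ for the matrix inner product and letting $h(x_i)$ denote the $i$-th row of $H$, we have $\|(I-L) - HH^T\|_F^2 = \|I-L\|_F^2 - 2\langle I-L,\, HH^T\rangle + \|HH^T\|_F^2$. The first term is independent of $h$ and absorbs into $\mathrm{const}$. For the cross term I would use the definition of the normalized Laplacian: $(I-L)_{ij}$ equals the normalized adjacency entry $l_{x_ix_j}/(\sqrt{l_{x_i}}\sqrt{l_{x_j}})$ — the identity holds uniformly, including on the diagonal, since $L_{ii}=1-l_{x_ix_i}/l_{x_i}$ — while $(HH^T)_{ij} = h(x_i)^T h(x_j)$; hence $\langle I-L, HH^T\rangle = \sum_{i,j} \frac{l_{x_ix_j}}{\sqrt{l_{x_i}}\sqrt{l_{x_j}}} h(x_i)^T h(x_j)$. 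For the last term, $\|HH^T\|_F^2 = \sum_{i,j}(HH^T)_{ij}^2 = \sum_{i,j}\big(h(x_i)^T h(x_j)\big)^2$. Collecting the three pieces reproduces the stated expression for $\mathcal{L}(h)$ exactly, so the two displayed lines are literally equal; nothing more is needed for that half of the claim.

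Next I would identify the minimizer. As $H$ ranges over matrices with $k$ columns, $HH^T$ ranges over all symmetric positive semidefinite matrices of rank at most $k$, so minimizing $\|(I-L) - HH^T\|_F^2$ is a constrained low-rank approximation problem for the symmetric matrix $I-L$. By the symmetric/PSD form of the Eckart--Young--Mirsky theorem, the optimal $HH^T$ is the truncation of the eigendecomposition of $I-L$ to its $k$ leading eigen-pairs, and a corresponding optimal $H$ stacks those eigenvectors scaled by the square roots of the eigenvalues. Since $I-L$ and $L$ share the same eigenvectors, this is exactly the matrix $H$ singled out in the statement, and the row map $h$ realizing it minimizes $\mathcal{L}$.

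The real subtleties — where I expect the work to lie — are the two hypotheses hidden in the second move rather than the algebra. First, the population augmentation graph has a possibly continuous vertex set, so $I-L$ is an operator on an $L^2$ space rather than a finite matrix; one needs it to be compact (discrete spectrum) for the eigendecomposition and the Eckart--Young argument to transfer, and every $\sum_{i,j}$ must be read as the corresponding expectation under the population measure. Second, the PSD constraint: the best rank-$\le k$ \emph{PSD} approximation coincides with the leading truncated eigendecomposition only when the $k$ largest eigenvalues of $I-L$ are nonnegative — otherwise the constrained optimum would discard the negative ones — so one must either assume this nonnegativity (it holds, e.g., if the graph carries enough self-loop weight that $I-L\succeq 0$) or restrict $k$ accordingly. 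Finally, the optimal $H$ is unique only up to right multiplication by a $k\times k$ orthogonal matrix, which is the sense in which $H$ "can be learned as" $h$; the lemma is not asserting uniqueness of the representation.
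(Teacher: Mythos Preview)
The paper does not prove this lemma; it is quoted as a result of \citet{spectralcl} and the appendix only proves Proposition~\ref{pro:1}. Your argument is correct and is exactly the standard one used in that reference: expand the Frobenius norm, identify $(I-L)_{ij}$ with the normalized adjacency entry $l_{x_ix_j}/\sqrt{l_{x_i}l_{x_j}}$, and then invoke Eckart--Young--Mirsky for the rank-$k$ minimizer. Your discussion of the operator-theoretic, PSD-constraint, and orthogonal-ambiguity subtleties is more careful than what the cited source spells out, and none of it is wrong.

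Two small remarks, both about the lemma \emph{as transcribed} rather than about your reasoning. First, the optimal $H$ you obtain has columns equal to the leading eigenvectors of $I-L$ \emph{scaled by the square roots of the eigenvalues}, not the bare eigenvectors; the statement's ``has the eigenvectors \ldots\ as its columns'' is loose on this point, and you are right to note the orthogonal indeterminacy. Second, the leading eigenvectors of $I-L$ correspond to the \emph{smallest} eigenvalues of $L$, not the largest; the phrase ``$k$ largest eigenvalues of $L$'' in the statement is either a slip or a nonstandard use of ``Laplacian'' for the normalized adjacency. Your line ``since $I-L$ and $L$ share the same eigenvectors, this is exactly the matrix $H$ singled out in the statement'' glides over this mismatch; it would be cleaner to flag it explicitly rather than let the reader think the largest-$L$ eigenvectors are what the optimization picks out.
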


Using Lemma \ref{le:1}, we describe our approach for finding partitions in the augmentation graph, focusing on the prediction space. Since we have the target domain data $\mathcal{D}_t$ sampled from $P(\mathcal{X}^t)$, we now build an instance of the population augmentation graph denoted by $\hat{G}$. Each target feature $\mathbf{z}_i$ is represented as a vertex. We consider edges only between the $K$-nearest neighbors for each vertex, assuming that these neighbors share the same class semantic information. The edge weight $e_{ij}$ is set to 1 if $\mathbf{z}_j$ is among the $K$-nearest neighbors of $\mathbf{z}_i$ in the feature space, denoted by $N^K_i$. To efficiently retrieve the nearest neighbors, we utilize two memory banks $\mathcal{F}$ and $\mathcal{S}$ to retain all target features and their predictions as used in the previous studies \citep{aad,nrc,dance}. Since all samples are i.i.d. and each target feature has the same number of neighbors, the degree of all target features is equal (i.e., $e_i=e_j=K$ for $\forall i,j \in [0, M_t]$). Given a mini-batch $B=\{\mathbf{x}^t_i\}_{i=1}^b$ of target domain data samples, we can derive the spectral neighborhood clustering (SNC) loss on $\hat{G}$:
\begin{equation}
\mathcal{L}_\mathrm{SNC}(p_i) = - \frac{2}{K} \sum_{j \in N^K_i} p_i^T p_j + \sum_{k \in B} \left( p_i^T p_k \right)^2
\end{equation}
The first term attracts the predictions of neighborhood features, leading to tight clusters in the prediction space, similar to previous approaches \citep{aad, nrc, gsfda}. In contrast, the second term encourages distinct separation between different clusters by driving the predictions of all target features apart from each other. We further incorporate a decay factor into the second term to make predictions more certain as adaptation proceeds. As suggested in \citep{aad}, we multiply the second term by $(1+10\times \frac{\mathrm{iter}}{\mathrm{max\_iter}})^{-\beta}$ where we set $\beta$ to 5 in our experiments. Overall, the SNC loss facilitates clustering in the prediction space and supports cluster assignment by the classifier $g$.

\subsection{Implicit Feature Augmentation} \label{sec:3.4}
In this section, we aim to obtain an improved partition by further approximating the population augmentation graph $G$. We achieve this by augmenting target features connected to individual target features. The key focus of this section is to perform feature augmentation without relying on prior knowledge.

To preserve class semantic information while augmenting target features, we revisit the property of deep learning models \citep{isda,linear}, which suggests that class semantic information is represented by specific directions and magnitudes in the feature space. We discern the direction and magnitude for each class by estimating the class-wise covariance matrix of target features (denoted as $\boldsymbol{\Sigma} = \{\Sigma_c\}_{c=1}^{C}$) online during adaptation using mini-batch statistics. The detailed method for online covariance estimation, adopted from ISDA \citep{isda}, is described in Appendix \ref{app:covariance}. As target features lack class labels, we use their pseudo-labels $\hat{y}_i=\argmax_c p_i$ to update the covariance matrices accordingly.

Using the estimated covariance matrices, we can preserve class semantic information in augmented features by sampling features from a Gaussian distribution. Since an edge (Equation \ref{eq:edge}) connects a target feature with its \textit{neighbors}, we set the mean of the Gaussian distribution to the target feature itself, and we use the estimated covariance matrix corresponding to the pseudo-label as the variance:
\begin{equation}
\tilde{\mathbf{z}}_j, \tilde{\mathbf{z}}_k, \dots \sim \mathcal{N}(\mathbf{z}_i, \lambda\Sigma_{\hat{y}_i})
\end{equation}
At the beginning of adaptation, the estimation of covariance matrices may be inaccurate. Therefore, following \cite{isda}, we set $\lambda =  \lambda_0 \times \frac{\mathrm{iter}}{\mathrm{max\_iter}}$ and $\lambda_0=5$ in experiments. This gradually increases the impact of the estimated covariance matrices on the model as the adaptation proceeds.

Augmented features can be used in calculating the first term of the SNC loss to promote similar predictions. However, directly sampling features from the Gaussian distribution during adaptation can increase training time and memory usage, undesirable in the SFDA context. Instead, we aim to derive an upper bound for the expected SNC loss. Since obtaining a closed-form upper bound for the first term of the SNC loss is intractable, we consider its logarithm. This simplifies the solution while preserving the goal of adaptation. The explicit feature augmentation (EFA) loss of two augmented features, $\tilde{\mathbf{z}}_j$ and $\tilde{\mathbf{z}}_k$, sampled from the estimated neighbor distribution of $\mathbf{z}_i$, is expressed as follows:
\begin{equation}
\mathcal{L}_\mathrm{EFA}(\tilde{\mathbf{z}}_j,\tilde{\mathbf{z}}_k) = -\log \tilde{p}_j^{T} \tilde{p}_k
\end{equation}
where $\tilde{p}_j=\delta(g(\tilde{\mathbf{z}}_j))$ and $\tilde{p}_k=\delta(g(\tilde{\mathbf{z}}_k))$ are the predictions of the augmented features. 

We then derive an upper bound for the expected EFA loss and propose the implicit feature augmentation (IFA) loss as follows:
\begin{proposition} \label{pro:1}
Suppose that $\tilde{\mathbf{z}}_j, \tilde{\mathbf{z}}_k \sim \mathcal{N}(\mathbf{z}_i, \lambda\Sigma_{\hat{y}_i})$, then we have an upper bound of expected $\mathcal{L}_\mathrm{EFA}$ for an infinite number of augmented features, which we call implicit feature augmentation loss:
\begin{align}
\mathcal{L}^{\infty}_\mathrm{EFA}(\mathbf{z}_i;f,g) 
&= \mathbb{E}_{\tilde{\mathbf{z}}_j \sim \mathcal{N}(\mathbf{z}_i, \lambda\Sigma_{\hat{y}_i})} \left[ \mathbb{E}_{\tilde{\mathbf{z}}_k \sim \mathcal{N}(\mathbf{z}_i, \lambda\Sigma_{\hat{y}_i})} \left[ -\log \tilde{p}_j^{T} \tilde{p}_k \right] \right]
\\
&\leq -2 \sum_{c=1}^{C} \log \frac{\exp(g(\mathbf{z}_i)_c)}{\scalebox{0.8}{$\displaystyle\sum_{c'=1}^{C} \exp \left(g(\mathbf{z}_i)_{c'}+ \frac{\lambda}{2}(w_{c'}-w_{c})^{T}\Sigma_{\hat{y}_i}(w_{c'}-w_{c})\right)$}}
= \mathcal{L}_\mathrm{IFA}(\mathbf{z}_i,\Sigma_{\hat{y}_i},g)
\end{align}
where $g(\cdot)_c$ denotes the classifier output (logit) for the $c$-th class, and $w_c$ is the weight vector for the $c$-th class of the classifier $g$.
\end{proposition}
Proof of Proposition \ref{pro:1} is provided in Appendix \ref{app:proof}. Optimizing the IFA loss allows us to achieve the effect of sampling an infinite number of features with minimal extra computation and memory usage.

In the denominator of the IFA loss, the first term is proportional to the prediction for the $c'$-th class of the target feature. The second term relates to the square of the cosine similarity between the normal vector of the decision boundary separating the $c$-th and $c'$-th classes, and the direction of feature augmentation. Namely, the denominator can be interpreted as a regularization term that aligns the decision boundary between the predicted and other classes with the principal direction of the covariance matrix of the predicted class. This is in accordance with the findings by \citet{thousand}, which suggest that the explicit regularizer of data augmentation encourages the kernel space of the model's Jacobian matrix to be aligned with the principal direction of the tangent space of the augmented data manifold. Recalling that the target feature is augmented to the direction of variance of target features with the same pseudo-label, the IFA loss promotes the decision boundaries for each class to align with the principal direction of the variance of target features with the same pseudo-label.

Maintaining class semantics in augmented features relies heavily on a well-disentangled feature space. If similar classes, such as cars and trucks, have similar covariances, it suggests that the feature space may not capture their semantic differences sufficiently. This could lead to non-class-preserving transformations during augmentation, possibly undermining the model's performance when the IFA loss is incorporated. To mitigate this, a more disentangled feature space is needed, ensuring a distinct representation for each class.

To encourage each direction in the feature space to represent different semantics (i.e., to promote a disentangled space), we maximize the cosine distance between covariance matrices corresponding to similar classes. This indicates that features with different pseudo-labels are distributed in distinct directions. Consequently, the semantic information of different classes is encoded in separate directions within the feature space, resulting in a disentangled feature space. We propose the feature disentanglement (FD) loss as follows:
\begin{equation}
\mathcal{L}_\mathrm{FD} = -\frac{1}{2}\sum_{i,j} a_{ij} \left( 1-\frac{\mathrm{tr}\{\Sigma_i\ \Sigma_j\}}{\|\Sigma_i\|_F\|\Sigma_j\|_F} \right)
\end{equation}
where $\Sigma_i$ and $\Sigma_j$ denote the covariance matrices for the $i$-th and $j$-th classes, respectively, and the weight $a_{ij}$ serves as a measure of similarity between $i$-th and $j$-th classes, and it is calculated as the dot product of the mean prediction vector (i.e., $a_{ij} = \bar{p}_i^T \bar{p}_j$, where $\bar{p}_c=\frac{1}{|\{i:\hat{y}_i = c\}|}\sum_{i \in \{i: \hat{y}_i = c\}} p_i$). Therefore, the FD loss places greater emphasis on class pairs that exhibit similar predictions on the target domain data. The weights are calculated at the beginning of each epoch. 

The final objective for adaptation can be expressed as follows:
\begin{equation} \label{eq:total_loss}
\min_{f,g} \mathcal{L}_\mathrm{SNC}+\alpha_{1}\mathcal{L}_\mathrm{IFA}+\alpha_{2}\mathcal{L}_\mathrm{FD}
\end{equation}
where $\alpha_{1}$ and $\alpha_{2}$ denote hyperparameters. The procedure of SF(DA)$^2$ is presented in Algorithm \ref{alg}.

\begin{wrapfigure}{r}{0.5\textwidth}
\vspace{-0.9cm}
\begin{minipage}{0.5\textwidth}
\begin{algorithm}[H]
\caption{Adaptation procedure of SF(DA)$^2$}
\SetKwInOut{Input}{input}
\SetKwInOut{Output}{output}
\begin{algorithmic}[1] \label{alg}
\REQUIRE $f$ and $g$ (trained on $\mathcal{D}_s$), $\mathcal{D}_t = \{\mathbf{x}_i^t\}_{i=1}^{M_t}$   
\WHILE{training loss is not converged}
    \IF{epoch start}
        \STATE Update $a_{ij}$ for FD loss
    \ENDIF
    \STATE Sample batch $B$ from $\mathcal{D}_t$ and update $\mathcal{F}$, $\mathcal{S}$
    \STATE Retrieve neighbors $\mathcal{N}_i^K$ for each $\mathbf{z}_i$ in $B$ 
    \STATE Update $f$ and $g$ using SGD
    \STATE $~~~~\nabla_{f, g}~ \mathcal{L}_\mathrm{SNC}+\alpha_{1}\mathcal{L}_\mathrm{IFA}+\alpha_{2}\mathcal{L}_\mathrm{FD}$
\ENDWHILE
\end{algorithmic}
\end{algorithm}
\end{minipage}
\vspace{-0.5cm}
\end{wrapfigure}

\section{Experiments} \label{sec:experiment}
In this section, we evaluate the performance of SF(DA)$^2$ on several benchmark datasets: Office-31 \citep{office31}, VisDA \citep{visda}, DomainNet \citep{domainnet}, PointDA-10 \citep{pointdan}, and VisDA-RSUT \citep{isfda}. The results on Office-31 and details of the datasets are provided in Appendix \ref{app:additional_datasets} and \ref{app:dataset_details}, respectively. 



\paragraph{Implementation details}
For a fair comparison, we adopt identical network architectures, optimizers, and batch sizes as benchmark methods \citep{shot,nrc,aad}. We run our methods with three different random seeds and report the average accuracies. \textbf{SF} in the tables denotes source-free. More implementation details are presented in Appendix \ref{app:implementation_details}.

\begin{table}[t!]
\renewcommand{\tabcolsep}{1.5mm}
\centering
\caption{Accuracy (\%) on the VisDA dataset (ResNet-101). }
\label{tab:visda}
{\resizebox{0.97\columnwidth}{!}
{\begin{tabular}{l|c|cccccccccccc|c}
	\toprule
    Method & SF & plane & bicycle & bus & car & horse & knife & mcycl & person & plant & sktbrd & train & truck & Per-class \\
    \midrule
    BSP \citep{bsp} & \xmark & 92.4 & 61.0 & 81.0 & 57.5 & 89.0 & 80.6 & 90.1 & 77.0 & 84.2 & 77.9 & 82.1 & 38.4 & 75.9 \\
    SAFN \citep{safn} & \xmark & 93.6 & 61.3 & 84.1 & 70.6 & 94.1 & 79.0 & 91.8 & 79.6 & 89.9 & 55.6 & 89.0 & 24.4 & 76.1 \\
    MCC \citep{mcc} & \xmark & 88.7 & 80.3 & 80.5 & 71.5 & 90.1 & 93.2 & 85.0 & 71.6 & 89.4 & 73.8 & 85.0 & 36.9 & 78.8 \\
    FixBi \citep{fixbi} & \xmark & 96.1 & 87.8 & 90.5 & 90.3 & 96.8 & 95.3 & 92.8 & 88.7 & 97.2 & 94.2 & 90.9 & 25.7 & 87.2 \\
    \midrule
    Source only \citep{resnet} & - & 60.9 & 21.6 & 50.9 & 67.6 & 65.8 & 6.3 & 82.2 & 23.2 & 57.3 & 30.6 & 84.6 & 8.0 & 46.6 \\
    3C-GAN \citep{3cgan} & \cmark & 94.8 & 73.4 & 68.8 & 74.8 & 93.1 & 95.4 & 88.6 & 84.7 & 89.1 & 84.7 & 83.5 & 48.1 & 81.6 \\
    SHOT \citep{shot} & \cmark & 94.6 & 87.5 & 80.4 & 59.5 & 92.9 & 95.1 & 83.1 & 80.2 & 90.9 & 89.2 & 85.8 & 56.9 & 83.0 \\
    NRC \citep{nrc} & \cmark & 96.1 & 90.8 & 83.9 & 61.5 & 95.7 & 95.7 & 84.4 & 80.7 & 94.0 & 91.9 & 89.0 & 59.5 & 85.3 \\
    CoWA-JMDS \citep{cowa} & \cmark & 96.2 & 90.6 & 84.2 & 75.5 & 96.5 & 97.1 & 88.2 & 85.6 & 94.9 & 93.0 & 89.2 & 53.5 & 87.0 \\
    AaD \citep{aad} & \cmark & 96.8 & 89.3 & 83.8 & 82.8 & 96.5 & 95.2 & 90.0 & 81.0 & 95.7 & 92.9 & 88.9 & 54.6 & \underline{87.3} \\
    \textbf{SF(DA)$^2$} & \cmark & 96.8 & 89.3 & 82.9 & 81.4 & 96.8 & 95.7 & 90.4 & 81.3 & 95.5 & 93.7 & 88.5 & 64.7 & \textbf{88.1} \\
    \bottomrule
\end{tabular}}
}
\end{table}
\begin{table}[t!]
\centering
\caption{Accuracy (\%) on 7 domain shifts of the DomainNet-126 dataset (ResNet-50).} 
\label{tab:domainnet}
{\resizebox{0.72\columnwidth}{!}
{\begin{tabular}{l|c|ccccccc|c}
	\toprule
    Method & SF & S$\to$P & C$\to$S & P$\to$C & P$\to$R & R$\to$S & R$\to$C & R$\to$P & Avg. \\
    \midrule
    MCC \citep{mcc} & \xmark & 47.3 & 34.9 & 41.9 & 72.4 & 35.3 & 44.8 & 65.7 & 48.9 \\
    \midrule
    Source only \citep{resnet} & - & 50.1 & 46.9 & 53.0 & 75.0 & 46.3 & 55.5 & 62.7 & 55.6 \\
    TENT \citep{tent} & \cmark & 52.4 & 48.5 & 57.9 & 67.0 & 54.0 & 58.5 & 65.7 & 57.7 \\
    SHOT \citep{shot} & \cmark & 66.1 & 60.1 & 66.9 & 80.8 & 59.9 & 67.7 & 68.4 & \underline{67.1} \\
    NRC \citep{nrc} & \cmark & 65.7 & 58.6 & 64.5 & 82.3 & 58.4 & 65.2 & 68.2 & 66.1 \\
    AaD \citep{aad} & \cmark & 65.4 & 54.2 & 59.8 & 81.8 & 54.6 & 60.3 & 68.5 & 63.5 \\
    \textbf{SF(DA)$^2$} & \cmark & 67.7 & 59.6 & 67.8 & 83.5 & 60.2 & 68.8 & 70.5 & \textbf{68.3} \\
    \bottomrule
\end{tabular}}
}
\end{table}
\begin{table}[t!]
\renewcommand{\tabcolsep}{3mm}
\centering
\caption{Comparison of SF(DA)$^2$++ and other two-stage methods on VisDA (ResNet-101). }
\label{tab:visda++}
{\resizebox{0.85\columnwidth}{!}
{\begin{tabular}{ccccc}
	\toprule
    Method & SHOT++ \citep{shot++} &  feat-mixup + SHOT++ \citep{feat_mixup} & \textbf{SF(DA)$^2$++} \\
    \midrule
    Per-class & 87.3 & 87.8 & \textbf{89.6} \\
    \bottomrule
\end{tabular}}
}
\end{table}
\begin{table}[t!]
\renewcommand{\tabcolsep}{1.0mm}
\centering
\caption{Accuracy (\%) on the PointDA-10 dataset (PointNet).} 
\label{tab:pointda}
{\resizebox{1\columnwidth}{!}
{\begin{tabular}{l|c|cccccc|c}
	\toprule
    Method & SF & Model$\to$Shape & Model$\to$Scan & Shape$\to$Model & Shape$\to$Scan & Scan$\to$Model & Scan$\to$Shape & Avg. \\
    \midrule
        MMD \citep{mmd} & \xmark & 57.5 & 27.9 & 40.7 & 26.7 & 47.3 & 54.8 & 42.5 \\
        DANN \citep{dann} & \xmark & 58.7 & 29.4 & 42.3 & 30.5 & 48.1 & 56.7 & 44.2 \\
        ADDA \citep{adda} & \xmark & 61.0 & 30.5 & 40.4 & 29.3 & 48.9 & 51.1 & 43.5 \\
        MCD \citep{mcd} & \xmark & 62.0 & 31.0 & 41.4 & 31.3 & 46.8 & 59.3 & 45.3 \\
        PointDAN \citep{pointdan} & \xmark & 64.2 & 33.0 & 47.6 & 33.9 & 49.1 & 64.1 & 48.7 \\ 
    \midrule
        Source only \citep{pointnet} & - & 43.1 & 17.3 & 40.0 & 15.0 & 33.9 & 47.1 & 32.7 \\
        VDM-DA \citep{vdm} & \cmark & 58.4 & 30.9 & 61.0 & 40.8 & 45.3 & 61.8 & 49.7 \\
        NRC \citep{nrc} & \cmark & 64.8 & 25.8 & 59.8 & 26.9 & 70.1 & 68.1 & 52.6 \\
        AaD \citep{aad} & \cmark & 69.6 & 34.6 & 67.7 & 28.8 & 68.0 & 66.6 & \underline{55.9} \\
        \textbf{SF(DA)$^2$} & \cmark & 70.3 & 35.5 & 68.3 & 29.0 & 70.4 & 69.2 & \textbf{57.1} \\
    \bottomrule
\end{tabular}}
}
\end{table}

Most hyperparameters of our method do not require heavy tuning. We set $K$ to 5 on VisDA, PointDA-10, and VisDA-RSUT, and 2 on DomainNet. We set $\alpha_1$ to 1e-4 on VisDA, DomainNet, and PointDA-10, and 1e-3 on VisDA-RSUT. We set $\alpha_2$ to 10 on VisDA, PointNet-10, and VisDA-RSUT, and 1 on DomainNet.

\subsection{Evaluation Results}
\paragraph{2D and 3D Datasets}
We compare SF(DA)$^2$ with the source-present and source-free DA methods on 2D image datasets. For VisDA, we reproduce SHOT, NRC, CoWA-JMDS, AaD using their official codes and the pretrained models released by SHOT \citep{shot}. As shown in Table \ref{tab:visda}, our method outperforms all other methods on VisDA in terms of average accuracy. We find similar observations on the results on DomainNet in Table \ref{tab:domainnet}.

Additionally, we extend SF(DA)$^2$ into a two-stage version called SF(DA)$^2$++. To ensure a fair comparison, we utilize the second training stage, which follows the same approach as SHOT++ \citepapndx{shot++}. The results presented in Table \ref{tab:visda++} demonstrate that SF(DA)$^2$++ outperforms other two-stage methods on the VisDA dataset. 

We also conduct comparisons on a 3D point cloud dataset, PointDA-10. In Table \ref{tab:pointda}, our method significantly outperforms not only PointDAN \citep{pointdan} especially designed for \textit{source-present} domain adaptation on point cloud data, but also source-free methods by a large margin (about 4.5\%p). These results clearly demonstrate the effectiveness of SF(DA)$^2$ in domain adaptation.

\begin{table}[t!]
\renewcommand{\tabcolsep}{1.5mm}
\centering
\caption{Accuracy (\%) on the VisDA-RSUT dataset (ResNet-101).} 
\label{tab:visda-rsut}
{\resizebox{\columnwidth}{!}
{\begin{tabular}{l|c|cccccccccccc|c}
	\toprule
    Method & SF & plane & bicycle & bus & car & horse & knife & mcycl & person & plant & sktbrd & train & truck & Per-class \\
    \midrule
    DANN \citep{dann} & \xmark & 71.7 & 35.7 & 58.5 & 21.0 & 80.9 & 73.0 & 45.7 & 23.7 & 12.2 & 4.3 & 1.5 & 0.9 & 35.8 \\
    BSP \citep{bsp} & \xmark & 100.0 & 57.1 & 68.9 & 56.8 & 83.7 & 26.7 & 78.7 & 16.2 & 63.7 & 1.9 & 0.1 & 0.1 & 46.2 \\
    MCD \citep{mcd} & \xmark & 63.0 & 41.4 & 84.0 & 67.3 & 86.6 & 93.9 & 85.6 & 76.3 & 84.1 & 11.3 & 5.0 & 3.0 & 58.5 \\
    \midrule
    Source only \citep{resnet} & - & 79.7 & 15.7 & 40.6 & 77.2 & 66.8 & 11.1 & 85.1 & 12.9 & 48.3 & 14.3 & 64.6 & 3.3 & 43.3 \\
    SHOT \citep{shot} & \cmark & 86.2 & 48.1 & 77.0 & 62.8 & 92.0 & 66.2 & 90.7 & 61.3 & 76.9 & 73.5 & 67.2 & 9.1 & 67.6 \\
    CoWA-JMDS \citep{cowa} & \cmark & 63.8 & 32.9 & 69.5 & 59.9 & 93.2 & 95.4 & 92.3 & 69.4 & 85.1 & 68.4 & 64.9 & 32.3 & 68.9 \\
    NRC \citep{nrc} & \cmark & 86.2 & 47.6 & 66.7 & 68.1 & 94.7 & 76.6 & 93.7 & 63.6 & 87.3 & 89.0 & 83.6 & 20.5 & 73.1 \\
    AaD \citep{aad} & \cmark & 73.9 & 33.3 & 56.6 & 71.4 & 90.1 & 97.0 & 91.9 & 70.8 & 88.1 & 87.2 & 81.2 & 39.4 & \underline{73.4} \\
    \textbf{SF(DA)$^2$} & \cmark & 79.0 & 43.3 & 73.6 & 74.7 & 92.8 & 98.3 & 93.4 & 79.1 & 90.1 & 87.5 & 81.1 & 34.2 & \textbf{77.3} \\
    \bottomrule
\end{tabular}}
}
\end{table}
\begin{figure*}[t!] 
\centering
\includegraphics[width=1\textwidth]{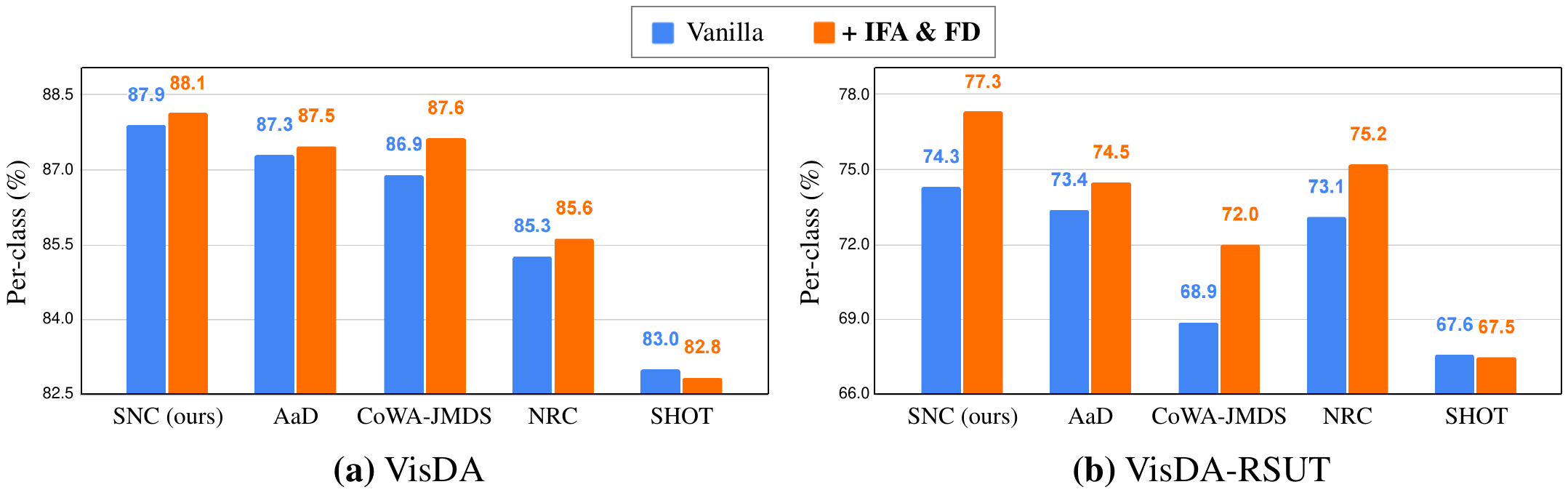} 
\caption{Effectiveness of implicit feature augmentation.}
\label{fig:ifa}
\end{figure*}

\paragraph{Imbalanced Dataset}
In real-world domain adaptation scenarios, the classes of data can often be imbalanced. In order to evaluate our method in these scenarios, we perform a comparison of our method with competitive benchmarks on VisDA-RSUT, a dataset with highly imbalanced classes. For VisDA-RSUT, we reproduce SHOT, NRC, CoWA-JMDS, AaD using their official codes. The results presented in Table \ref{tab:visda-rsut} demonstrate that our method surpasses the baseline methods by a significant margin (about 4\%p). The results show the robustness of our method on extreme class imbalance. We also present results using additional metrics in Appendix \ref{app:additional_metrics}

\subsection{Analysis}

\paragraph{Effectiveness of Implicit Feature Augmentation}
Figure \ref{fig:ifa} illustrates the performance improvement of existing SFDA methods and our SNC after incorporating IFA and FD losses. The results highlight that IFA and FD improve most existing SFDA methods. Remarkably, their effectiveness is prominent in VisDA-RSUT, as shown in Figure \ref{fig:ifa} (b), leading to a 3\%p performance gain on CoWA-JMDS and SNC. The significant improvement from incorporating IFA and FD into SNC can be attributed to their effective feature augmentation on the augmentation graph. The substantial performance gain in CoWA-JMDS can stem from its use of Gaussian distribution estimation in the feature space. For AaD and NRC, their use of neighbor information in the feature space aligns well with the assumptions of IFA, leading to performance enhancement. However, the assumptions of SHOT do not align well with IFA, resulting in a slight performance decrease when IFA loss is incorporated.

\paragraph{Loss Functions for Implicit Feature Augmentation}
We conduct ablation studies on the VisDA-RSUT dataset to understand the impact of IFA and FD loss functions. In Figure \ref{fig:ablation_visda_rsut}, we measure (a) average accuracy and (b) the cosine similarity between the weight vector $w$ and the largest eigenvector $v$ of the estimated covariance matrix of the corresponding class. During adaptation with SNC, we observe a consistent increment in the cosine similarity. 

We then analyze the impact of adding IFA or FD loss alone to SNC loss. Firstly, adding IFA alone decreases the cosine similarity, aligning the Jacobian matrix of the classifier ($w$) with the principal direction of the tangent space of the augmented data manifold (orthogonal with $v$), as discussed in Section \ref{sec:3.4}. However, the sole addition of IFA results in a performance decrease. If different directions in the feature space do not clearly correspond to distinct class semantics, performing feature augmentation in such an entangled feature space may not preserve class information, causing a degradation in performance. Secondly, applying only FD loss introduces additional constraints to the model by disentangling the feature space. Therefore, the absence of a term utilizing the disentangled space could result in a performance drop. 

While the use of a single loss leads to a performance decrease, a synergistic effect arises when IFA loss and FD loss are jointly employed. The disentangled feature space obtained via FD loss allows the estimated covariance matrix to provide directions for class-preserving feature augmentation, and IFA loss performs the feature augmentation. This leads to a notable performance improvement and a further decrease in cosine similarity (compared to SNC+IFA).

We also present the ablation study of the hyperparameters used for the SNC loss in Appendix \ref{app:hyperparameter}




\begin{figure*}[t] 
\centering
\includegraphics[width=0.85\textwidth]{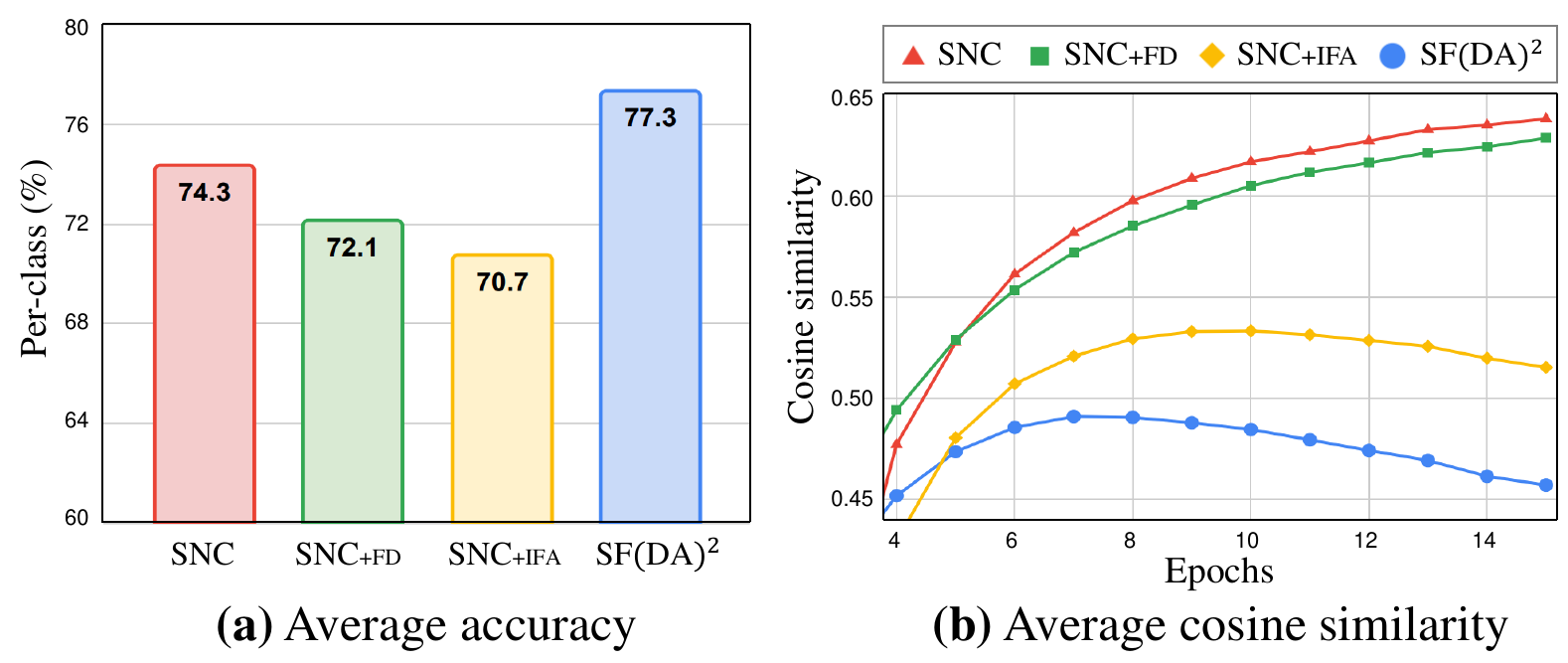} 
\caption{Ablation study on loss functions for implicit feature augmentation.} \label{fig:ablation_visda_rsut}
\end{figure*}

\begin{figure}[t!]
\centering
\begin{minipage}{0.42\linewidth}
\centering
\includegraphics[width=0.9\columnwidth]{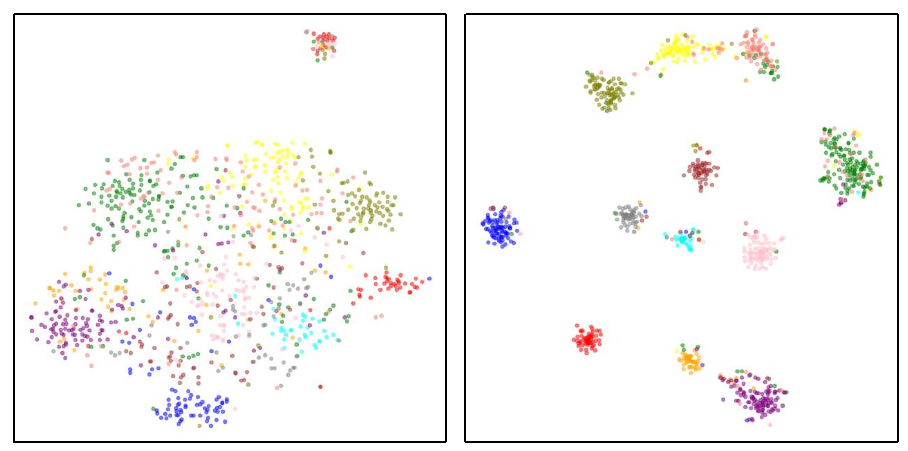} 
\caption{tSNE visualization of the feature space before and after adaptation.}
\label{fig:tsne}
\end{minipage}
\hfill
\begin{minipage}{0.55\linewidth}
\setlength\tabcolsep{5pt}
\captionof{table}{Runtime analysis on the VisDA dataset.}
\label{tab:runtime}
\begin{tabular*}{1\linewidth}{c|cc} 
    \toprule
    \footnotesize{Method} &\footnotesize{ Runtime (s/epoch)} & \footnotesize{Per-class} \\ \midrule
    \footnotesize{SHOT} & \footnotesize{353.53} & \footnotesize{83.0} \\
    \footnotesize{AaD} & \footnotesize{274.36} & \footnotesize{87.3} \\
    \footnotesize{\textbf{SF(DA)$^2$} (Ours)} & \footnotesize{284.49} & \footnotesize{88.1} \\ \midrule
    \footnotesize{AaD (5\% of $\mathcal{F}, \mathcal{S}$)} & \footnotesize{266.76} & \footnotesize{85.9} \\
    \footnotesize{\textbf{SF(DA)$^2$} (5\% of $\mathcal{F}, \mathcal{S}$)} & \footnotesize{273.52} & \footnotesize{87.4} \\ 
    \bottomrule
\end{tabular*}
\end{minipage}
\end{figure}

\paragraph{Runtime and tSNE Analysis}
We assess our method's efficiency by measuring runtime on the VisDA dataset. The first three rows of Table \ref{tab:runtime}  show our method outperforming SHOT and AaD with less runtime or marginal increase of runtime. The last two rows of Table \ref{tab:runtime} limit the memory banks $\mathcal{F}$ and $\mathcal{S}$ to 5\% of target domain data. Compared to AaD, our method shows more robust performance against the reduction of memory bank size with a similar runtime. Figure \ref{fig:tsne} visualizes the feature space before and after adaptation on the VisDA dataset, with distinct colors indicating different classes. It clearly shows that target features are effectively clustered after adaptation.




\section{Conclusion} \label{sec:conclusion}
In this work, we propose a novel method for source-free domain adaptation (SFDA), called SF(DA)$^2$, which leverages the intuitions of data augmentation. We propose the spectral neighbor clustering (SNC) loss to find meaningful partitions in the augmentation graph defined on the feature space of the pretrained model. We also propose the implicit feature augmentation (IFA) and feature disentanglement (FD) loss functions to efficiently augment target features in the augmentation graph without linearly increasing computation and memory consumption. Our experiments demonstrate the effectiveness of SF(DA)$^2$ in the SFDA scenario and its superior performance compared to state-of-the-art methods. In the future, we plan to explore the potential of our method for other domain adaptation scenarios and investigate its applicability to other tasks beyond classification.

\newpage

\subsubsection*{Acknowledgments}
This work was supported by the National Research Foundation of Korea (NRF) grants funded by the Korea government (Ministry of Science and ICT, MSIT) (2022R1A3B1077720 and 2022R1A5A708390811), Institute of Information \& Communications Technology Planning \& Evaluation (IITP) grants funded by the Korea government (MSIT) (2021-0-01343: Artificial Intelligence Graduate School Program (Seoul National University) and 2022-0-00959), the BK21 FOUR program of the Education and Research Program for Future ICT Pioneers, Seoul National University in 2024, and `Regional Innovation Strategy (RIS)' through the National Research Foundation of Korea(NRF) funded by the Ministry of Education(MOE) (2022RIS-005).

\bibliography{reference}
\bibliographystyle{iclr2024_conference}

\newpage
\appendix
\section{Additional Results}
\subsection{Additional Dataset} \label{app:additional_datasets}
For Office-31, we reproduce SHOT, NRC, CoWA-JMDS, AaD using their official codes. Table \ref{tab:office31} presents a comparison between SF(DA)$^2$ and various DA methods, considering both source-present and source-free approaches. Notably, our method exhibits the highest average accuracy among all source-free DA methods.

\begin{table}[h!]
\centering
\caption{Accuracy (\%) on the Office-31 dataset (ResNet-50).}
\label{tab:office31}
{\resizebox{0.75\columnwidth}{!}
{\begin{tabular}{l|c|cccccc|c}
	\toprule
    Method & SF & A$\to$D & A$\to$W & D$\to$A & D$\to$W & W$\to$A & W$\to$D & Avg. \\
    \midrule
    MCD \citepapndx{mcd} & \xmark & 92.2 & 88.6 & 69.5 & 98.5 & 69.7 & 100.0 & 86.5 \\
    CDAN \citepapndx{cdan} & \xmark & 92.9 & 94.1 & 71.0 & 98.6 & 69.3 & 100.0 & 87.7 \\
    MCC \citepapndx{mcc} & \xmark & 95.6 & 95.4 & 72.6 & 98.6 & 73.9 & 100.0 & 89.4 \\
    SRDC \citepapndx{srdc} & \xmark & 95.8 & 95.7 & 76.7 & 99.2 & 77.1 & 100.0 & 90.8 \\
    \midrule
    Source only \citepapndx{resnet} & - & 68.9 & 68.4 & 62.6 & 96.7 & 60.7 & 99.3 & 76.1 \\
    SHOT \citepapndx{shot} & \cmark & 93.8 & 89.6 & 74.5 & 98.9 & 75.3 & 99.9 & 88.7 \\
    NRC \citepapndx{nrc} & \cmark & 92.9 & 93.5 & 76.0 & 98.1 & 75.8 & 99.9 & 89.4 \\
    3C-GAN \citepapndx{3cgan} & \cmark & 92.7 & 93.7 & 75.3 & 98.5 & 77.8 & 99.8 & 89.6 \\ 
    AaD \citepapndx{aad} & \cmark & 94.5 & 94.5 & 75.6 & 98.2 & 75.4 & 99.9 & \underline{89.7} \\
    \textbf{SF(DA)$^2$} & \cmark & 95.8 & 92.1 & 75.7 & 99.0 & 76.8 & 99.8 & \textbf{89.9} \\
    \bottomrule
\end{tabular}}
}
\end{table}



\subsection{Additional Metrics} \label{app:additional_metrics}
SFDA methods have used accuracy as the performance metric for evaluation. However, accuracy can sometimes be misleading when used with imbalanced datasets. To mitigate this and provide a more comprehensive evaluation, we add our analysis on the VisDA-RSUT dataset with the harmonic mean of accuracy and the F1-score. In Tables \ref{tab:harmonic} and \ref{tab:f1_score}, The results highlight the capability of our method in dealing with imbalanced datasets.

\begin{table}[ht]
\renewcommand{\tabcolsep}{1.5mm}
\centering
\caption{Harmonic mean of accuracies (\%) on the VisDA-RSUT dataset (ResNet-101).}
\label{tab:harmonic}
{\resizebox{0.6\columnwidth}{!}
{\begin{tabular}{l|cccccc}
	\toprule
    Method & Source only & SHOT & CoWA-JMDS & NRC & AaD & \textbf{SF(DA)$^2$}\\ 
    \midrule
    Harmonic & 16.7 & 45.2 & 61.1 & 61.3 & 65.9 & \textbf{70.1}\\ 
    \bottomrule
\end{tabular}}
}
\end{table}
\begin{table*}[ht]
\renewcommand{\tabcolsep}{1.5mm}
\centering
\caption{F1-score (\%) on the VisDA-RSUT dataset (ResNet-101).}
\label{tab:f1_score}
{\resizebox{0.95\textwidth}{!}
{\begin{tabular}{l|cccccccccccc|c}
	\toprule
    Method & plane & bicycle & bus & car & horse & knife & mcycl & person & plant & sktbrd & train & truck & Per-class \\
    \midrule
    Source only & 6.2 & 17.0 & 9.8 & 5.3 & 53.6 & 17.2 & 38.1 & 18.9 & 55.3 & 23.7 & 69.3 & 6.3 & 26.7 \\
    SHOT & 9.0 & 8.7 & 9.4 & 9.5 & 72.9 & 35.1 & 83.1 & 60.1 & 85.1 & 82.9 & 79.7 & 14.4 & 45.8 \\
    CoWA-JMDS & 17.7 & 12.1 & 12.3 & 9.9 & 75.4 & 42.2 & 76.7 & 62.3 & 87.9 & 78.1 & 78.2 & 48.1 & 50.1 \\
    NRC & 36.4 & 4.2 & 15.5 & 11.4 & 67.5 & 48.4 & 78.6 & 61.2 & 90.3 & 92.8 & 90.6 & 36.9 & 52.8 \\
    AaD & 63.9 & 23.7 & 20.4 & 12.1 & 76.6 & 46.3 & 81.8 & 61.9 & 87.2 & 90.1 & 88.6 & 56.0 & 59.0 \\
    \textbf{SF(DA)$^2$} & 77.2 & 53.3 & 12.3 & 13.3 & 79.8 & 49.3 & 85.5 & 67.7 & 91.0 & 90.6 & 87.0 & 49.6 & \textbf{63.0} \\
    \bottomrule
\end{tabular}}
}
\end{table*}

\subsection{Hyperparameters} \label{app:hyperparameter}
We present the ablation study of the hyperparameters used in SNC on the VisDA dataset. The second term of the SNC loss ensures diverse predictions for different target features, preventing them from collapsing into one class. During adaptation, weakening the second term can promote a more clustered prediction space and improve performance. Figure \ref{fig:ablation_visda} (a) shows the ratio of target features that share the same and correct prediction with the 5-nearest neighbors, and Figure \ref{fig:ablation_visda} (b) shows the average confidence for the predicted labels of the target domain data. As the decaying factor increases from 2 to 5, Figures \ref{fig:ablation_visda} demonstrate that the target domain data are better clustered in (a) the feature space and (b) the prediction space. Figure \ref{fig:ablation_visda} (c) shows that our method is robust to the choice of hyperparameters $\beta$ and $K$.

\begin{figure*}[h!] 
\centering
\includegraphics[width=1\textwidth]{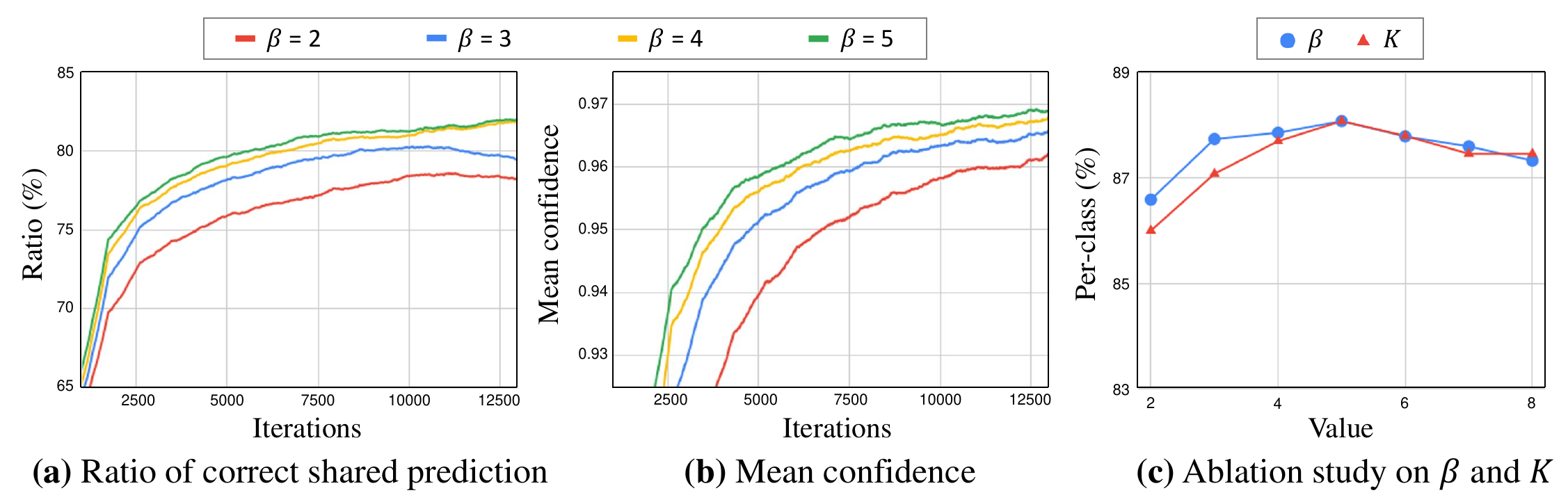} 
\caption{Ablation study on hyperparameters for spectral neighbor clustering.} \label{fig:ablation_visda}
\end{figure*}

We also present the ablation study of the hyperparameters ($\alpha_1$ and $\alpha_2$) used for IFA and FD losses on the VisDA dataset. As shown in Tables \ref{tab:alpha1} and \ref{tab:alpha2}, the model's performance is not sensitive to hyperparameters $\alpha_1$ and $\alpha_2$.

\begin{figure}[t!]
\centering
\begin{minipage}{0.4\linewidth}
\centering
\captionof{table}{Ablation study on the hyperparameter for spectral neighbor clustering.}
\label{tab:alpha1}
\begin{tabular}{cc} 
    \toprule
    \footnotesize{$\alpha_1$} & \footnotesize{Per-class (\%)} \\ \midrule
    1e-3 & 87.3 \\
    1e-4 & 88.1 \\
    1e-5 & 88.0 \\ 
    \bottomrule
\end{tabular}

\end{minipage}
\hspace{1cm}
\begin{minipage}{0.4\linewidth}
\setlength\tabcolsep{5pt}
\centering
\captionof{table}{Ablation study on the hyperparameter for feature disentanglement.}
\label{tab:alpha2}
\begin{tabular}{cc} 
    \toprule
    \footnotesize{$\alpha_2$} & \footnotesize{Per-class (\%)} \\ \midrule
    1 & 88.0 \\
    10 & 88.1 \\
    20 & 87.4 \\ 
    \bottomrule
\end{tabular}
\end{minipage}
\end{figure}

\section{Proofs} \label{app:proof}
\begin{proposition} \label{pro:1}
Suppose that $\tilde{\mathbf{z}}_j, \tilde{\mathbf{z}}_k \sim \mathcal{N}(\mathbf{z}_i, \lambda\Sigma_{\hat{y}_i})$, then we have an upper bound of expected $\mathcal{L}_\mathrm{EFA}$ for an infinite number of augmented features, which we call implicit feature augmentation loss:
\begin{align}
\mathcal{L}^{\infty}_\mathrm{EFA}(\mathbf{z}_i;f,g) 
&= \mathbb{E}_{\tilde{\mathbf{z}}_j \sim \mathcal{N}(\mathbf{z}_i, \lambda\Sigma_{\hat{y}_i})} \left[ \mathbb{E}_{\tilde{\mathbf{z}}_k \sim \mathcal{N}(\mathbf{z}_i, \lambda\Sigma_{\hat{y}_i})} \left[ -\log \tilde{p}_j^{T} \tilde{p}_k \right] \right]
\\
&\leq -2 \sum_{c=1}^{C} \log \frac{\exp(g(\mathbf{z}_i)_c)}{\scalebox{0.8}{$\displaystyle\sum_{c'=1}^{C} \exp \left(g(\mathbf{z}_i)_{c'}+ \frac{\lambda}{2}(w_{c'}-w_{c})^{T}\Sigma_{\hat{y}_i}(w_{c'}-w_{c})\right)$}}
= \mathcal{L}_\mathrm{IFA}(\mathbf{z}_i,\Sigma_{\hat{y}_i},g)
\end{align}
where $g(\cdot)_c$ denotes the classifier output (logit) for the $c$-th class, and $w_c$ is the weight vector for the $c$-th class of the classifier $g$.
\end{proposition}

\begin{proof}
Let the classifier $g$ consists of the weight matrix $W=[w_1 \dots w_C]^T$ and the bias $\mathbf{b} = [b_1 \dots b_C]$. Now we will derive a closed-form upper bound of expected $\mathcal{L}_\mathrm{EFA}$.
\begin{align}
    \mathcal{L}^{\infty}_\mathrm{EFA}&(\mathbf{z}_i;f,g)
    = \mathbb{E}_{\tilde{\mathbf{z}}_j \sim \mathcal{N}(\mathbf{z}_i, \lambda\Sigma_{\hat{y}_i})} \left[ \mathbb{E}_{\tilde{\mathbf{z}}_k \sim \mathcal{N}(\mathbf{z}_i, \lambda\Sigma_{\hat{y}_i})} \left[ -\log \tilde{p}_j^{T} \tilde{p}_k \right] \right] \\
    &= \mathbb{E}_{\tilde{\mathbf{z}}_j} \left[ \mathbb{E}_{\tilde{\mathbf{z}}_k}
    \left[ \sum_{c=1}^C  -\log \tilde{p}_j^c -\log \frac{\exp (w_c^T \tilde{\mathbf{z}}_k + b_c)}{\sum_{c'=1}^C \exp (w_{c'}^T \tilde{\mathbf{z}}_k + b_{c'})} \right] \right] \\
    &= \mathbb{E}_{\tilde{\mathbf{z}}_j} \left[ \mathbb{E}_{\tilde{\mathbf{z}}_k}
    \left[ \sum_{c=1}^C - \log \tilde{p}_j^c + \log \sum_{c'=1}^C \exp \left((w_{c'}-w_c)^T \tilde{\mathbf{z}}_k + (b_{c'}-b_c)\right) \right] \right] \\
    &\leq \mathbb{E}_{\tilde{\mathbf{z}}_j} \left[ \sum_{c=1}^C - \log \tilde{p}_j^c + \log \sum_{c'=1}^C  \mathbb{E}_{\tilde{\mathbf{z}}_k}
    \left[ \exp \left((w_{c'}-w_c)^T \tilde{\mathbf{z}}_k + (b_{c'}-b_c)\right) \right] \right] \label{eq:jensen} \\ 
    &= \mathbb{E}_{\tilde{\mathbf{z}}_j} \left[ \sum_{c=1}^C - \log \tilde{p}_j^c + \log \sum_{c'=1}^C  e^{(w_{c'}-w_c)^T \mathbf{z}_i + (b_{c'}-b_c) + \frac{\lambda}{2}(w_{c'}-w_c)^T \Sigma_{\hat{y}_i} (w_{c'}-w_c)} \right] \label{eq:mgf} \\
    &= \mathbb{E}_{\tilde{\mathbf{z}}_j} \left[ \sum_{c=1}^C - \log \tilde{p}_j^c - \log \frac{\exp(w_{c}^T \mathbf{z}_i + b_{c})}{{\scalebox{0.8}{$\displaystyle\sum_{c'=1}^C  \exp \left(w_{c'}^T \mathbf{z}_i + b_{c'} + \frac{\lambda}{2}(w_{c'}-w_c)^T \Sigma_{\hat{y}_i} (w_{c'}-w_c)\right)$}}} \right] \\
    &= \sum_{c=1}^C - 2 \log \frac{\exp(w_{c}^T \mathbf{z}_i + b_{c})}{{\scalebox{0.8}{$\displaystyle\sum_{c'=1}^C  \exp \left(w_{c'}^T \mathbf{z}_i + b_{c'} + \frac{\lambda}{2}(w_{c'}-w_c)^T \Sigma_{\hat{y}_i} (w_{c'}-w_c)\right)$}}} \label{eq:zj} \\
    &= - 2 \sum_{c=1}^C \log \frac{\exp(g(\mathbf{z}_i)_c)}{{\scalebox{0.8}{$\displaystyle\sum_{c'=1}^C  \exp \left(g(\mathbf{z}_i)_{c'} + \frac{\lambda}{2}(w_{c'}-w_c)^T \Sigma_{\hat{y}_i} (w_{c'}-w_c)\right)$}}} \\
    &= \mathcal{L}_\mathrm{IFA}(\mathbf{z}_i,\Sigma_{\hat{y}_i},g)
\end{align}
where $\tilde{p}_j^c$ denotes the prediction of $\tilde{\mathbf{z}}_j$ for the $c$-th class. Inequality \ref{eq:jensen} is obtained by applying Jensen's inequality for concave functions (i.e., $\mathbb{E}[\log \sum X] \leq \log \sum \mathbb{E}[X]$) given that the sum of log functions $\sum \log(\cdot)$ is concave.
To derive Equation \ref{eq:mgf}, we require the lemma of the moment-generating function for a Gaussian random variable:
\begin{lemma}
\label{le:mgf}
For a Gaussian random variable $X \sim \mathcal{N}(\mu, \sigma^2)$, its moment-generating function is:
\begin{equation}
\mathbb{E}[e^{tX}] = e^{t\mu+\frac{1}{2}\sigma^2 t^2}
\end{equation}
\end{lemma}
In the context of our proof, $(w_{c'}-w_c)^T \tilde{\mathbf{z}}_k + (b_{c'}-b_c)$ is a random variable that follows the Gaussian distribution:
\begin{equation}
(w_{c'}-w_c)^T \tilde{\mathbf{z}}_k + (b_{c'}-b_c) \sim \mathcal{N}\left((w_{c'}-w_c)^T \mathbf{z}_i + (b_{c'}-b_c), \lambda (w_{c'}-w_c)^T \Sigma_{\hat{y}_i} (w_{c'}-w_c) \right)
\end{equation}
Lemma \ref{le:mgf} then allows us to compute the expectation of this random variable, resulting in:
\begin{equation}
\mathbb{E}_{\tilde{\mathbf{z}}_k}[e^{\left( (w_{c'}-w_c)^T \tilde{\mathbf{z}}_k + (b_{c'}-b_c) \right)}] = e^{(w_{c'}-w_c)^T \mathbf{z}_i + (b_{c'}-b_c) + \frac{\lambda}{2}(w_{c'}-w_c)^T \Sigma_{\hat{y}_i} (w_{c'}-w_c)}
\end{equation}
Since $\tilde{\mathbf{z}}_j$ and $\tilde{\mathbf{z}}_k$ are drawn from the same distribution, we use the same process to derive Equation \ref{eq:zj}, concluding the proof.


\end{proof}

\section{Details}
\subsection{Implementation Details} \label{app:implementation_details}
We adopt the backbone of a ResNet-50 for DomainNet, ResNet-101 for VisDA and VisDA-RSUT, and PointNet for PointDA-10. In the final part of the network, we append a fully connected layer, batch normalization \citepapndx{batchnorm}, and the classifier $g$ comprised of a fully connected layer with weight normalization \citepapndx{weightnorm}. We adopt SGD with momentum 0.9 and train 15 epochs for VisDA, DomainNet, and VisDA-RSUT. We adopt Adam \citepapndx{adam} and train 100 epochs for PointDA-10. We set batch size to 64 except for DomainNet, where we set it to 128 for a fair comparison. We set the learning rate for VisDA and VisDA-RSUT to 1e-4, 5e-5 for DomainNet, and 1e-6 for PointDA-10, except for the last two layers. Learning rates for the last two layers are increased by a factor of 10, except for PointNet-10 where they are increased by a factor of 2 following NRC \citepapndx{nrc}. Experiments are conducted on a NVIDIA A40.

\subsection{Dataset Details} \label{app:dataset_details}
We use five benchmark datasets for 2D image and 3D point cloud recognition. These include Office-31 \citepapndx{office31}, with 3 domains (Amazon, Webcam, DSLR), 31 classes, and a total of 15,500 images; VisDA \citepapndx{visda}, with 152K synthetic and 55K real object images across 12 classes; VisDA-RSUT \citepapndx{isfda}, a subset of VisDA with highly imbalanced classes; DomainNet \citepapndx{domainnet}, a large-scale benchmark with 6 domains and 345 classes (we select 4 domains (Real, Sketch, Clipart, Painting) with 126 classes, and evaluate SFDA methods on 7 domain shifts following AdaContrast \citepapndx{adacontrast}.); and PointDA-10 \citepapndx{pointdan}, a 3D point cloud recognition dataset with 3 domains (namely ModelNet-10, ShapeNet-10, and ScanNet-10), 10 classes, and a total of around 27,700 training and 5,100 test images.

\subsection{Online Covariance Estimation} \label{app:covariance}
We follow the method for online covariance estimation proposed in ISDA \citepapndx{isda} as follows:
\begin{align}
\boldsymbol{\mu}^{(t)}_c &= \frac{n^{(t-1)}_c \boldsymbol{\mu}^{(t-1)}_c + m^{(t)}_c \boldsymbol{\mu}'^{(t)}_c}{n^{(t-1)}_c + m^{(t)}_c} \\
\Sigma^{(t)}_c &= \frac{n^{(t-1)}_c \Sigma^{(t-1)}_c + m^{(t)}_c \Sigma'^{(t)}_c}{n^{(t-1)}_c + m^{(t)}_c} + \frac{m^{(t)}_c + n^{(t-1)}_c m^{(t)}_c (\boldsymbol{\mu}^{(t-1)}_c - \boldsymbol{\mu}'^{(t)}_c) (\boldsymbol{\mu}^{(t-1)}_c - \boldsymbol{\mu}'^{(t)}_c )^T}{(n^{(t-1)}_c + m^{(t)}_c)^2}\\
n^{(t)}_c &= n^{(t-1)}_c + m^{(t)}_c
\end{align}
where $\boldsymbol{\mu}^{(t)}_c$ and $\Sigma^{(t)}_c$ represent the mean and covariance matrix, respectively, for the $c$-th class at time step $t$. $\boldsymbol{\mu}'^{(t)}_c$ and $\Sigma'^{(t)}_c$ denote the mean and covariance matrix, respectively, of target features which are predicted as the $c$-th class in $t$-th minibatch. $n_c^{(t)}$ is the total number of target features that are predicted as the $c$-th class in all $t$ minibatches, and $m_c^{(t)}$ is the number of target features that are predicted as the $c$-th class in $t$-th minibatch.

\section{Additional Related Work} \label{app:relatedwork}
\subsection{Spectral Contrastive Loss}
While we were inspired by the spectral contrastive loss (Spectral CL) \citepapndx{spectralcl} to find partitions of our augmentation graph, there are distinct differences between Spectral CL and our SNC loss. 

Spectral CL utilizes augmented data as positive pairs, and data augmentation requires prior knowledge about the dataset. For instance, transforming the color of a lemon to green would turn it into a lime, thus changing the class of the data. Such augmentation that fails to preserve class information can introduce bias to the model, potentially bringing harm to the model's performance \citepapndx{balestriero2022effects}. Hence, while Spectral CL requires class-preserving augmentations based on prior knowledge, our SNC loss determines positive pairs without needing explicit augmentation and prior knowledge.

\subsection{Implicit Semantic Data Augmentation}
The IFA loss is motivated by the ISDA \citepapndx{isda}, and we adopted the online estimation of the class-wise covariance matrix from ISDA. However, there are two notable differences between the IFA loss and ISDA. 

Firstly, ISDA is designed for supervised learning, where data has labels and no domain shift. In contrast, IFA loss is computed using target domain data without labels and considers SFDA setting with domain shift. To perform implicit feature augmentation under these conditions, IFA loss utilizes pseudo-labels of target domain data which dynamically change during the adaptation process to estimate the class-wise covariance matrix. 

Secondly, ISDA derives an upper bound for the expectation of the cross entropy loss for supervised learning. On the other hand, IFA loss is designed to approximate the population augmentation graph, which allows SNC loss to find an improved partition, and it is achieved by deriving an upper bound for the expectation of (the logarithm of) our SNC loss tailored for SFDA.

\subsection{Other Domain Adaptation Methods}
To address the domain shift problem, domain adaptation has been actively studied, and numerous methods have been proposed. \citepapndx{review1, review2}.

NRC \citepapndx{nrc} and ``Connect, not collapse'' (CNC) \citepapndx{cnc} are graph-based domain adaptation methods and TSA \citepapndx{tsa} utilizes ISDA for domain adaptation. Our paper presents clear methodological differences and performance advantages compared to existing methods.

NRC \citepapndx{nrc} is a graph-based source-free method that encourages prediction consistency among neighbors in the feature space by utilizing neighborhood affinity, which is discussed in Section \ref{sec:2}. In contrast, our method constructs the augmentation graph in the feature space based on the insights from Intuitions \ref{int:1} and \ref{int:2}. We then propose SNC loss to identify partitions of this augmentation graph. These methodological differences result in significant performance gaps (e.g., the results on VisDA, PointDA-10, and VisDA-RSUT in Tables \ref{tab:visda}, \ref{tab:pointda}, and \ref{tab:visda-rsut}).

CNC \citepapndx{cnc} pretrains models using contrastive loss. While CNC depends on data augmentation that needs prior knowledge, our method defines positive pairs without explicit data augmentation. In Table \ref{tab:cnc}, we present a comparative experiment with CNC on 12 domain shifts in the DomainNet dataset (40 classes) and ResNet-50 architecture. The results demonstrate that our approach outperforms the direct application of contrastive loss for pretraining the model in domain adaptation.

\begin{table*}[ht]
\centering
\caption{Accuracy (\%) on 12 domain shifts of the DomainNet dataset (40 classes, ResNet-50). For CNC, we brought the performances of SwAV+extra, which was the best-performing contrastive learning method.} 
\label{tab:cnc}
{\resizebox{1\textwidth}{!}
{\begin{tabular}{l|cccccccccccc|c}
	\toprule
    Method & S$\to$C & S$\to$P & S$\to$R & C$\to$S & C$\to$P & C$\to$R & P$\to$S & P$\to$C & P$\to$R & R$\to$S & R$\to$C & R$\to$P & Avg. \\
    \midrule
    SwAV+extra \citepapndx{cnc} & 53.5 & 46.8 & 58.1 & 46.2 & 41.7 & 59.4 & 48.7 & 41.3 & 69.0 & 44.6 & 54.2 & 57.3 & 51.7 \\
    \textbf{SF(DA)$^2$} & 63.8 & 57.2 & 58.6 & 60.3 & 58.9 & 59.8 & 61.4 & 58.0 & 58.2 & 59.5 & 65.2 & 65.4 & \textbf{60.4} \\
    \bottomrule
\end{tabular}}
}
\end{table*}

TSA \citepapndx{tsa} employs ISDA for training the source model. Similar to ISDA, it derives an upper bound for the expectation of the cross entropy loss and requires labeled (source domain) data. Conversely, IFA loss is designed to approximate the population augmentation graph, which allows SNC loss to find an improved partition. IFA loss utilizes pseudo-labels of unlabeled target domain data, which dynamically change during the adaptation process, to estimate the class-wise covariance matrix. Using the covariance matrix, IFA loss is derived as a closed-form upper bound for the expectation of (the logarithm of) our SNC loss. The difference in method results in a notable performance gap (e.g., the results in Table \ref{tab:visda_additional}).

\begin{table}[ht]
\renewcommand{\tabcolsep}{1.5mm}
\centering
\caption{Additional performance comparison on the VisDA dataset.}
\label{tab:visda_additional}
{\resizebox{0.45\columnwidth}{!}
{\begin{tabular}{l|cccccc}
	\toprule
    Method & BSP + TSA & 3C-GAN + HCL & \textbf{SF(DA)$^2$} \\
    \midrule
    Per-class & 82.0 & 84.2 & \textbf{88.1} \\
    \bottomrule
\end{tabular}}
}\end{table}

HCL \citepapndx{hcl} contrasts embeddings from the current model and the historical models. In the paper, 3C-GAN+HCL is the best-performing method on the VisDA dataset, showing a large performance gap compared to our method (e.g., the results in Table \ref{tab:visda_additional}).

Feat-mixup \citepapndx{feat_mixup} improves domain adaptation performance via mixup in the feature space with augmented samples. Since this method utilizes data augmentation, it requires prior knowledge for class-preserving image augmentation and multiple forward passes for augmented samples. In contrast, our SNC loss, grounded in Intuitions \ref{int:1} and \ref{int:2} in the manuscript, determines positive pairs without explicit augmentation. This difference results in a large performance gap (e.g., the results in Table \ref{tab:visda++}).

Additionally, ProxyMix \citepapndx{proxymix} utilizes classifier weights as the class prototypes and proxy features and employs mixup regularization to align the proxy and target domain. PGL \citepapndx{pgl} leverages graph neural networks for open-set domain adaptation.

\section{Code Availability}
Code is available in Supplementary Material.

\newpage

\bibliographyapndx{reference}
\bibliographystyleapndx{iclr2024_conference}

\end{document}